\documentclass[letterpaper]{article}

\usepackage{aaai}
\usepackage{times}
\usepackage{helvet}
\usepackage{courier}
\usepackage{amsmath}
\usepackage{amsthm}
\usepackage{color}
\usepackage[utf8]{inputenc}
\usepackage{graphicx}
\usepackage{dsfont}
\usepackage{algorithmicx}
\usepackage{algpseudocode}
\usepackage{xspace}
\usepackage{hyperref}

\setlength{\pdfpagewidth}{8.5in}
\setlength{\pdfpageheight}{11in}

\pdfinfo{
/Title (Reinforcement Learning in POMDPs with Memoryless Options and Option-Observation Initiation Sets)
/Subject (Proceedings of the AAAI 2018 Conference on Artificial Intelligence)
/Author (Denis Steckelmacher, Diederik M. Roijers, Anna Harutyunyan, Peter Vrancx, Hélène Plisnier, Ann Nowé)
}

\title{Reinforcement Learning in POMDPs with Memoryless Options and Option-Observation Initiation Sets}
\author{
	Denis Steckelmacher \\
	{\bf \large Diederik M. Roijers} \\
	{\bf \large Anna Harutyunyan} \\
	{\bf \large Peter Vrancx} \\
	{\bf \large Hélène Plisnier} \\
	{\bf \large Ann Nowé} \\
	Artificial Intelligence Lab, Vrije Universiteit Brussel, Belgium
}

\nocopyright
\allowdisplaybreaks
\setcounter{secnumdepth}{2}
\newtheorem{theorem}{Theorem}

\newcommand{\N}{\ensuremath{\mathcal{N}}}

\newcommand{\Next}{OOIs\xspace}
\newcommand{\md}{{-}}

\begin{document}
	
	\maketitle
	
	\begin{abstract}
		Many real-world reinforcement learning problems have a hierarchical nature, and often exhibit some degree of partial observability. While hierarchy and partial observability are usually tackled separately (for instance by combining recurrent neural networks and options), we show that addressing both problems simultaneously is simpler and more efficient in many cases. More specifically, we make the initiation set of options conditional on the previously-executed option, and show that options with such Option-Observation Initiation Sets (\Next) are at least as expressive as Finite State Controllers (FSCs), a state-of-the-art approach for learning in POMDPs. \Next are easy to design based on an intuitive description of the task, lead to explainable policies and keep the top-level and option policies memoryless. Our experiments show that \Next allow agents to learn optimal policies in challenging POMDPs, while being much more sample-efficient than a recurrent neural network over options.
	\end{abstract}
	
	\section{Introduction}
	
	Real-world applications of reinforcement learning (RL) face two main challenges: complex long-running tasks and partial observability. Options, the particular instance of Hierarchical RL we focus on, addresses the first challenge by factoring a complex task into simpler sub-tasks \cite{Barto2003,Roy2006,Tessler2016}. Instead of learning what action to perform depending on an observation, the agent learns a top-level policy that repeatedly selects options, that in turn execute sequences of actions before returning \cite{Sutton1999}. The second challenge, partial observability, is addressed by maintaining a belief of what the agent thinks the full state is \cite{Kaelbling1998,Cassandra1994}, reasoning about possible future observations \cite{Littman2001,Boots2009}, storing information in an external memory for later reuse \cite{Peshkin2001,Zaremba2015,Graves2016}, or using recurrent neural networks (RNNs) to allow information to flow between time-steps \cite{Bakker2001,Mnih2016}.
	
	Combined solutions to the above two challenges have recently been designed for planning \cite{He2011}, but solutions for learning algorithms are not yet ideal. HQ-Learning decomposes a task into a sequence of fully-observable subtasks \cite{Wiering1997}, which precludes cyclic tasks from being solved. Using recurrent neural networks in options and for the top-level policy \cite{Sridharan2010} addresses both challenges, but brings in the design complexity of RNNs \cite{Jozefowicz15,Angeline94,Mikolov14}. RNNs also have limitations regarding long time horizons, as their memory decays over time \cite{Hochreiter1997}.
	
	In her PhD thesis, Precup (\citeyear{Precup2000}, page 126) suggests that options may already be close to addressing partial observability, thus removing the need for more complicated solutions. In this paper, we prove this intuition correct by:
	
	\begin{enumerate}
		\item Showing that standard options do not suffice in POMDPs;
		\item Introducing Option-Observation Initiation Sets (\Next), that make the initiation sets of options conditional on the previously-executed option;
		\item Proving that \Next make options at least as expressive as Finite State Controllers (Section \ref{sec:fsc}), thus able to tackle challenging POMDPs.
	\end{enumerate}
	
	\noindent
	In contrast to existing HRL algorithms for POMDPs \cite{Wiering1997,Theocharous2002,Sridharan2010}, \Next handle repetitive tasks, do not restrict the action set available to sub-tasks, and keep the top-level and option policies memoryless. A wide range of robotic and simulated experiments in Section \ref{sec:experiments} confirm that \Next allow partially observable tasks to be solved optimally, demonstrate that \Next are much more sample-efficient than a recurrent neural network over options, and illustrate the flexibility of \Next regarding the amount of domain knowledge available at design time. In Section \ref{sec:treemaze}, we demonstrate the robustness of \Next to sub-optimal option sets. While it is generally accepted that the designer provides the options and their initiation sets, we show in Section \ref{sec:duplicatedinput} that random initiation sets, combined with learned option policies and termination functions, allow \Next to be used without any domain knowledge.
	
	\subsection{Motivating Example}
	\label{sec:motivation}
	
	\begin{figure}
		\centering
		\includegraphics{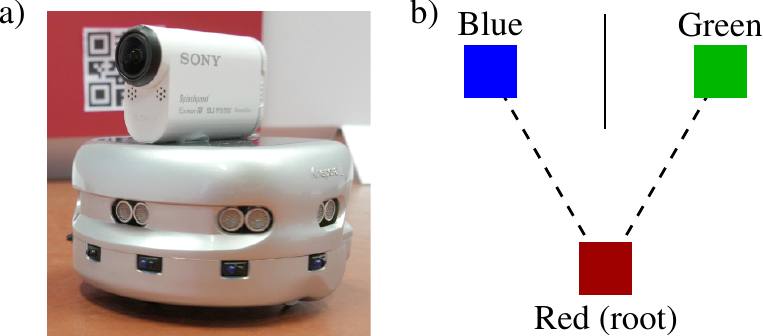}
		\caption{Robotic object gathering task. a) Khepera III, the two-wheeled robot used in the experiments. b) The robot has to gather objects from two terminals separated by a wall, and to bring them to the root.}
		\label{fig:khepera}
	\end{figure}
	
	\Next are designed to solve complex partially-observable tasks that can be decomposed into a set of fully-observable sub-tasks. For instance, a robot with first-person sensors may be able to avoid obstacles, open doors or manipulate objects even if its precise location in the building is not observed. We now introduce such an environment, on which our robotic experiments of Section \ref{sec:khepera} are based.
	
	A Khepera III robot\footnotemark has to gather objects from two terminals separated by a wall, and to bring them to the root (see Figure \ref{fig:khepera}). Objects have to be gathered one by one from a terminal until it becomes empty, which requires many journeys between the root and a terminal. When a terminal is emptied, the other one is automatically refilled. The robot therefore has to alternatively gather objects from both terminals, and the episode finishes after the terminals have been emptied some random number of times. The root is colored in red and marked by a paper QR-code encoding \texttt{1}. Each terminal has a screen displaying its color and a dynamic QR-code (\texttt{1} when full, \texttt{2} when empty). Because the robot cannot read QR-codes from far away, the state of a terminal cannot be observed from the root, where the agent has to decide to which terminal it will go. This makes the environment partially observable, and requires the robot to remember which terminal was last visited, and whether it was full or empty.
	
	The robot is able to control the speed of its two wheels. A wireless camera mounted on top of the robot detects bright color blobs in its field of view, and can read nearby QR-codes. Such low-level actions and observations, combined with a complicated task, motivate the use of hierarchical reinforcement learning. Fixed options allow the robot to move towards the largest red, green or blue blob in its field of view. The options terminate as soon as a QR-code is in front of the camera and close enough to be read. The robot has to learn a policy over options that solves the task.
	
	\footnotetext{\url{http://www.k-team.com/mobile-robotics-products/old-products/khepera-iii}}
	
	The robot may have to gather a large number of objects, alternating between terminals several times. The repetitive nature of this task is incompatible with HQ-Learning \cite{Wiering1997}. Options with standard initiation sets are not able to solve this task, as the top-level policy is memoryless \cite{Sutton1999} and cannot remember from which terminal the robot arrives at the root, and whether that terminal was full or empty. Because the terminals are a dozen feet away from the root, almost a hundred primitive actions have to be executed to complete any root/terminal journey. Without options, this represents a time horizon much larger than usually handled by recurrent neural networks \cite{Bakker2001} or finite history windows \cite{Lin1993}.
	
	\Next allow each option to be selected conditionally on the previously executed one (see Section \ref{sec:theory}), which is much simpler than combining options and recurrent neural networks \cite{Sridharan2010}. The ability of \Next to solve complex POMDPs builds on the time abstraction capabilities and expressiveness of options. Section \ref{sec:khepera} shows that \Next allow a policy for our robotic task to be learned to expert level. Additional experiments demonstrate that both the top-level and option policies can be learned by the agent (see Section \ref{sec:duplicatedinput}), and that \Next lead to substantial gains over standard initiation sets even if the option set is reduced or unsuited to the task (see Section \ref{sec:treemaze}).
	
	\begin{figure}
		\centering
		\includegraphics{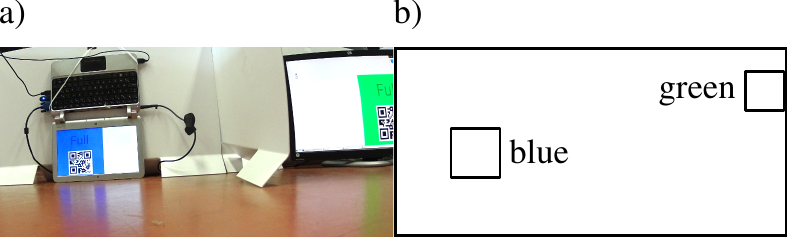}
		\caption{Observations of the Khepera robot. a) Color image from the camera. b) Color blobs detected by the vision system, as observed by the robot. QR-codes can only be decoded when the robot is a couple of inches away from them.}
		\label{fig:khepera_observations}
	\end{figure}
	
	\section{Background}
	
	This section formally introduces Markov Decision Processes (MDPs), Options, Partially Observable MDPs (POMDPs) and Finite State Controllers, before presenting our main contribution in Section \ref{sec:ooi}.
	
	\subsection{Markov Decision Processes}
	
	A discrete-time Markov Decision Process (MDP) $\langle S, A, R, T, \gamma \rangle$ with discrete actions is defined by a possibly-infinite set $S$ of states, a finite set $A$ of actions, a reward function $R(s_t, a_t, s_{t+1}) \in \mathcal{R}$, that provides a scalar reward $r_t$ for each state transition, a transition function $T(s_t, a_t, s_{t+1}) \in [0, 1]$, that outputs a probability distribution over new states $s_{t+1}$ given a $(s_t, a_t)$ state-action pair, and $0 \le \gamma < 1$ the discount factor, that defines how sensitive the agent should be to future rewards.
	
	A stochastic memoryless policy $\pi(s_t, a_t) \in [0, 1]$ maps a state to a probability distribution over actions. The goal of the agent is to find a policy $\pi^*$ that maximizes the expected cumulative discounted reward $E_{\pi^*}[\sum_t \gamma^t r_t]$ obtainable by following that policy.
	
	\subsection{Options}
	
	The options framework, defined in the context of MDPs \cite{Sutton1999}, consists of a set of options $O$ where each option $\omega \in O$ is a tuple $\langle \pi_\omega, I_\omega, \beta_\omega \rangle$, with $\pi_\omega (s_t, a_t) \in [0, 1]$ the memoryless option policy, $\beta_\omega(s_t) \in [0, 1]$ the termination function that gives the probability for the option $\omega$ to terminate in state $s_t$, and $I_\omega \subseteq S$ the initiation set that defines in which states $\omega$ can be started \cite{Sutton1999}.
	
	The memoryless top-level policy $\mu(s_t, \omega_t) \in [0, 1]$ maps states to a distribution over options and allows to choose which option to start in a given state. When an option $\omega$ is started, it executes until termination (due to $\beta_\omega$), at which point $\mu$ selects a new option based on the now current state.
	
	\subsection{Partially Observable MDPs}
	
	Most real-world problems are not completely captured by MDPs, and exhibit at least some degree of partial observability. A Partially Observable MDP (POMDP) $\langle \Omega, S, A, R, T, W, \gamma \rangle$ is an MDP extended with two components: the possibly-infinite set $\Omega$ of observations, and the $W : S \rightarrow \Omega$ function that produces observations $x$ based on the unobservable state $s$ of the process. Two different states, requiring two different optimal actions, may produce the same observation. This makes POMDPs remarkably challenging for reinforcement learning algorithms, as memoryless policies, that select actions or options based only on the current observation, typically no longer suffice.
	
	\subsection{Finite State Controllers}
	
	Finite State Controllers (FSCs) are commonly used in POMDPs. An FSC $\langle \N, \psi, \eta, \eta^0 \rangle$ is defined by a finite set \N~of nodes, an action function $\psi (n_t, a_t) \in [0, 1]$ that maps nodes to a probability distribution over actions, a successor function $\eta (n_{t-1}, x_t, n_t) \in [0, 1]$ that maps nodes and observations to a probability distribution over next nodes, and an initial function $\eta^0 (x_1, n_1) \in [0, 1]$ that maps initial observations to nodes \cite{Meuleau1999}.

	At the first time-step, the agent observes $x_1$ and activates a node $n_1$ by sampling from $\eta^0 (x_1, \cdot)$. An action is performed by sampling from $\psi (n_1, \cdot)$. At each time-step $t$, a node $n_t$ is sampled from $\eta (n_{t-1}, x_t, \cdot)$, then an action $a_t$ is sampled from $\psi (n_t, \cdot)$. FSCs allow the agent to select actions according to the entire history of past observations \cite{Meuleau1999}, which has been shown to be one of the best approaches for POMDPs \cite{Lin1992}. \Next, our main contribution, make options at least as expressive and as relevant to POMDPs as FSCs, while being able to leverage the hierarchical structure of the problem.
	
	\section{Option-Observation Initiation Sets}
	\label{sec:ooi}
	
	Our main contribution, Option-Observation Initiation Sets (\Next), make the initiation sets of options conditional on the option that has just terminated. We prove that \Next make options at least as expressive as FSCs (thus suited to POMDPs, see Section \ref{sec:fsc}), even if the top-level and option policies are memoryless, while options without \Next are strictly less expressive than FSCs (see Section \ref{sec:vanilla}). In Section \ref{sec:experiments}, we show on one robotic and two simulated tasks that \Next allow challenging POMDPs to be solved optimally.
	
	\subsection{Conditioning on Previous Option}
	\label{sec:theory}
	\newcommand{\Ot}{\ensuremath{\mathcal{O}_t}}
	
	Descriptions of partially observable tasks in natural language often contain allusions at sub-tasks that must be sequenced or cycled through, possibly with branches. This is easily mapped to a policy over options (learned by the agent) and sets of options that may or may not follow each other.
	
	A good memory-based policy for our motivating example, where the agent has to bring objects from two terminals to the root, can be described as ``go to the green terminal, then go to the root, then go back to the green terminal if it was full, to the blue terminal otherwise'', and symmetrically so for the blue terminal. This sequence of sub-tasks, that contains a condition, is easily translated to a set of options. Two options, $\omega_{GF}$ and $\omega_{GE}$, sharing a single policy, go from the green terminal to the root (using low-level motor actions). $\omega_{GF}$ is executed when the terminal is full, $\omega_{GE}$ when it is empty. At the root, the option that goes back to the green terminal can only follow $\omega_{GF}$, not $\omega_{GE}$. When the green terminal is empty, going back to it is therefore forbidden, which forces the agent to switch to the blue terminal when the green one is empty.
	
	We now formally define our main contribution, Option-Observation Initiation Sets (\Next), that allow to describe which options may follow which ones. We define the initiation set $I_\omega$ of option $\omega$ so that the set \Ot~of options available at time $t$ depends on the observation $x_t$ and previously-executed option $\omega_{t-1}$:
	
	\begin{align*}
		I_\omega &\subseteq \Omega \times (O \cup \{ \emptyset \}) \\
		\Ot &\equiv \{ \omega \in O : (x_t, \omega_{t-1}) \in I_\omega \}
	\end{align*}
	
	\noindent
	with $\omega_0 = \emptyset$, $\Omega$ the set of observations and $O$ the set of options. \Ot~allows the agent to condition the option selected at time $t$ on the one that has just terminated, even if the top-level policy does not observe $\omega_{t-1}$. The top-level and option policies remain memoryless. Not having to observe $\omega_{t-1}$ keeps the observation space of the top-level policy small, instead of extending it to $\Omega \times O$, without impairing the representational power of \Next, as shown in the next sub-section.
	
	\subsection{\Next Make Options as Expressive as FSCs}
	\label{sec:fsc}
	
	Finite State Controllers are state-of-the-art in policies applicable to POMDPs \cite{Meuleau1999}. By proving that options with \Next are as expressive as FSCs, we provide a lower bound on the expressiveness of \Next and ensure that they are applicable to a wide range of POMDPs.
	
	\begin{theorem}
		\label{thm:fsc}
		\Next allow options to represent any policy that can be expressed using a Finite State Controller.
	\end{theorem}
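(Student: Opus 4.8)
The plan is to take an arbitrary FSC $\langle \N, \psi, \eta, \eta^0 \rangle$ and exhibit a set of options with \Next that, on any POMDP, induces exactly the same distribution over observation/action trajectories, hence represents the same policy. The tempting construction — one option per node $n$, executing a single action drawn from $\psi(n,\cdot)$ and then terminating — does not work, and seeing why pins down the right one: once an option terminates, the next option is drawn from the memoryless top-level policy $\mu(x_t,\cdot)$ restricted to $\Ot$, so the previously-executed option can only change which subset $\Ot$ we condition on, never the relative weights $\mu$ assigns inside it; but $\eta(n,x,\cdot)$ depends on the source node $n$ arbitrarily, even between source nodes with identical successor supports. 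The fix is to make options correspond to ordered pairs of nodes.

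Concretely, I would introduce one option $\omega_{(n,m)}$ for every $(n,m)\in(\N\cup\{\star\})\times\N$, read as ``the FSC has just entered node $m$, having come from node $n$'' ($\star$ being a placeholder for the first step). Each $\omega_{(n,m)}$ has the constant memoryless policy $\pi_{\omega_{(n,m)}}(\cdot,a)=\psi(m,a)$ and terminates after one action, so running it is indistinguishable from the FSC acting from node $m$. The initiation sets are chosen so that $\omega_{(n,m)}$ can start exactly after an option whose second index is $n$: $I_{\omega_{(n,m)}}=\Omega\times\{\omega_{(n',n)}:n'\in\N\cup\{\star\}\}$ for $n\in\N$, and $I_{\omega_{(\star,m)}}=\Omega\times\{\emptyset\}$, using $\omega_0=\emptyset$. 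Then after $\omega_{t-1}=\omega_{(\cdot,n)}$ the set $\Ot$ is precisely the ``fiber'' $\{\omega_{(n,m)}:m\in\N\}$, and distinct fibers are disjoint. This disjointness lets one memoryless top-level policy encode every $\eta(n,x,\cdot)$ simultaneously: put $\mu(x,\omega_{(n,m)})\propto\eta(n,x,m)$ on the $\N$-fibers and $\mu(x,\omega_{(\star,m)})\propto\eta^0(x,m)$, normalizing over $O$ (the constant $|\N|+1$ works, since each $\eta(n,x,\cdot)$ and $\eta^0(x,\cdot)$ sums to $1$). Renormalizing $\mu(x_t,\cdot)$ onto the $n$-fiber recovers $\eta(n,x_t,\cdot)$ exactly, and onto the $\star$-fiber recovers $\eta^0(x_1,\cdot)$.

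It then remains to run both systems under a shared source of randomness and argue by induction on $t$ that the FSC node $n_t$ always equals the second index of the current option $\omega_t$ and that the observation and action sequences coincide. The base case uses $\omega_1=\omega_{(\star,m)}$ with $m\sim\eta^0(x_1,\cdot)$ and $a_1\sim\psi(m,\cdot)$; the inductive step uses that, given a common $x_{t+1}$ and a common source node $n_t$, the available set is the $n_t$-fiber over which $\mu$ renormalizes to $\eta(n_t,x_{t+1},\cdot)$, so $\omega_{t+1}=\omega_{(n_t,m)}$ with $m\sim\eta(n_t,x_{t+1},\cdot)$ and then $a_{t+1}\sim\psi(m,\cdot)$, matching the FSC step for step; the same action induces the same environment transition and next observation. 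Since both the top-level and option policies in the construction are memoryless and the induced trajectory law equals the FSC's, the theorem follows. The one genuinely delicate point is the observation underlying the whole construction — that a memoryless top-level policy cannot reshape the distribution \emph{within} $\Ot$, which is exactly why the option set must be indexed by ordered pairs of nodes rather than single nodes; everything after that is bookkeeping.
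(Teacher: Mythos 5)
Your construction is the same as the paper's: one single-step option per ordered pair of FSC nodes (plus initial options tied to $\emptyset$), with initiation sets forcing the first index of the new option to match the second index of the previous one, and $\mu$ proportional to $\eta$ so that renormalizing onto the available fiber recovers the successor function. Your treatment is if anything slightly more careful than the paper's, since you make the normalization of $\mu$ and the trajectory-coupling induction explicit where the paper only sketches them.
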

	
	\begin{proof}
		\newcommand{\opt}{\ensuremath{\langle n'_{t-1}, n_t \rangle}}
		\newcommand{\optprev}{\ensuremath{\langle n'_{t-2}, n_{t-1} \rangle}}
		\newcommand{\initopt}{\ensuremath{\langle \emptyset, n_1 \rangle}}
		
		The reduction from any FSC to options requires one option \opt~per ordered pair of nodes in the FSC, and one option \initopt~per node in the FSC. Assuming that $n_0 = \emptyset$ and $\eta (\emptyset, x_1, \cdot) = \eta^0 (x_1, \cdot)$, the options are defined by:
		
		\begin{align}
			\label{eq:fsc_beta}
			\beta_{\opt} (x_t)     &= 1 \\
			\label{eq:fsc_pio}
			\pi_{\opt} (x_t, a_t)  &= \psi(n_t, a_t) \\
			\label{eq:fsc_pi}
			\mu (x_t, \opt)        &= \eta (n'_{t-1}, x_t, n_t) \\
			\nonumber
			I_{\initopt}           &= \Omega \times \{ \emptyset \} \\
			\nonumber
			I_{\opt}               &= \Omega \times \{\optprev : n'_{t-1} = n_{t-1} \}
		\end{align}
		
		Each option corresponds to an edge of the FSC. Equation \ref{eq:fsc_beta} ensures that every option stops after having emitted a single action, as the FSC takes one transition every time-step. Equation \ref{eq:fsc_pio} maps the current option to the action emitted by the destination node of its corresponding FSC edge. We show that $\mu$ and $I_{\opt}$ implement $\eta (n_{t-1}, x_t, n_t)$, with $\omega_{t-1} = \optprev$, by:
		
		\begin{align*}
			\mu   &(x_t, \opt) = \\ &\begin{cases}
				\eta (n_{t-1}, x_t, n_t) &\begin{array}{l}
					\optprev \in I_{\opt} \\
					\Leftrightarrow n'_{t-1} = n_{t-1}
				\end{array} \\
				0 &\begin{array}{l}
					\optprev \notin I_{\opt} \\
					\Leftrightarrow n'_{t-1} \ne n_{t-1}
				\end{array}
			\end{cases}
		\end{align*}
		
		Because $\eta$ maps nodes to nodes and $\mu$ selects options representing pairs of nodes, $\mu$ is extremely sparse and returns a value different from zero, $\eta (n_{t-1}, x_t, n_t)$, only when \optprev~and \opt~agree on $n_{t-1}$.
	\end{proof}

	Our reduction uses options with trivial policies, that execute for a single time-step, which leads to a large amount of options to compensate. In practice, we expect to be able to express policies for real-world POMDPs with much less options than the number of states an FSC would require, as shown in our simulated (Section \ref{sec:duplicatedinput}, 2 options) and robotic experiments (Section \ref{sec:khepera}, 12 options). In addition to being sufficient, the next sub-section proves that \Next are necessary for options to be as expressive as FSCs.
	
	\subsection{Original Options are not as Expressive as FSCs}
	\label{sec:vanilla}
	
	\begin{figure}
		\centering
		\includegraphics{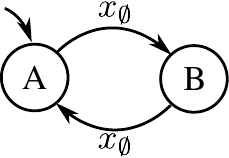}
		\caption{Two-nodes Finite State Controller that emits an infinite sequence ABAB... based on an uninformative observation $x_{\emptyset}$. This FSC cannot be expressed using options without \Next.}
		\label{fig:fsc}
	\end{figure}
	
	While options with regular initiation sets are able to express some memory-based policies \cite[page 7]{Sutton1999}, the tiny but valid Finite State Controller presented in Figure \ref{fig:fsc} cannot be mapped to a set of options and a policy over options (without \Next). This proves that options without \Next are strictly less expressive than FSCs.
	
	\begin{theorem}
		Options without \Next are not as expressive as Finite State Controllers.
	\end{theorem}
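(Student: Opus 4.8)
The plan is to argue by contradiction, exploiting the one feature of the environment in Figure~\ref{fig:fsc} that makes it hard: the observation function $W$ emits the same uninformative observation $x_\emptyset$ at every time-step. First I would lift the options framework to the POMDP setting in the obvious way (the top-level policy $\mu$, each option policy $\pi_\omega$, and each termination function $\beta_\omega$ take $x_t$ rather than $s_t$ as argument, and $I_\omega \subseteq \Omega$) and observe that, since that argument is always $x_\emptyset$, each of these objects collapses to a fixed quantity: a distribution $\bar\mu$ over options, a distribution $\bar\pi_\omega$ over actions for each $\omega$, and a scalar $\bar\beta_\omega \in [0,1]$. The support of $\bar\mu$ is then a fixed, non-empty set of options (non-empty, since otherwise the agent never acts and trivially cannot reproduce the FSC's behaviour). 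Suppose, for contradiction, that some such set of options \emph{without} \Next, together with $\mu$, reproduces the deterministic sequence $A,B,A,B,\dots$ emitted by the FSC.

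Next I would pin down the first two emitted actions. The option $\omega_1$ running at $t=1$ is drawn from $\bar\mu$ and the first action from $\bar\pi_{\omega_1}$, so $P(a_1 = A) = \sum_\omega \bar\mu(\omega)\,\bar\pi_\omega(A)$; requiring this to equal $1$ forces $\bar\pi_\omega(A)=1$ for every $\omega$ in the support of $\bar\mu$. Now consider $a_2$. Whichever option emits $a_2$ lies in the support of $\bar\mu$: it is either $\omega_1$ itself (which was selected, hence $\bar\mu(\omega_1)>0$) continuing because it did not terminate after one step, or a freshly selected $\omega_2\sim\bar\mu$ because $\omega_1$ terminated. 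In both cases $a_2$ is drawn from a $\bar\pi_\omega$ with $\omega$ in the support of $\bar\mu$, which by the previous step is the point mass on $A$; hence $P(a_2=A)=1$, contradicting the requirement $a_2=B$. The same reasoning extends by induction to show that options without \Next under a constant observation can emit only constant action sequences, of which the FSC of Figure~\ref{fig:fsc} is a concrete non-example.

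I do not expect any real mathematical obstacle here; the delicate part is the formalisation. I must state precisely what ``options without \Next in a POMDP'' means, and then make explicit the structural fact that carries the whole argument: without \Next, every emitted action comes from an option drawn — at the current decision point or a past one — from the single observation-indexed distribution $\bar\mu$, so there is no channel through which the previously emitted action (or previously executed option) can influence the next one. Care is also needed to cover the boundary cases $\bar\beta_\omega=0$ (an option that never terminates) and the possibility that no option has $x_\emptyset$ in its initiation set, but these are handled uniformly by the support-of-$\bar\mu$ formulation above. This missing channel is exactly what \Next reinstate, and Theorem~\ref{thm:fsc} shows it is enough to recover full FSC expressiveness.
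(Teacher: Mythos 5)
Your proof is correct and rests on the same counterexample as the paper: the two-node FSC of Figure~\ref{fig:fsc} emitting $A,B,A,B,\dots$ under the constant observation $x_\emptyset$. Where you differ is in how the impossibility is established. The paper's argument splits into two cases --- options that run for several time-steps (dismissed because their policies are memoryless) and single-time-step options (dismissed by citing \cite{Sutton1999}) --- and reads more as a proof sketch than a complete derivation. Your argument is self-contained and handles both cases uniformly: since every function in the framework is only ever evaluated at $x_\emptyset$, the top-level policy collapses to a fixed distribution $\bar\mu$, and every emitted action is sampled from $\bar\pi_\omega$ for some $\omega$ in the support of $\bar\mu$, whether that option was freshly selected or is still running. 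Forcing $a_1=A$ then forces $\bar\pi_\omega(A)=1$ on that support, so $a_2=A$ as well, contradicting $a_2=B$. This buys rigor (no external citation needed, and the termination edge cases $\bar\beta_\omega\in\{0,1\}$ are covered for free) at the cost of a short formalisation of what options mean in a POMDP, which the paper leaves implicit. One point worth stating explicitly in a final write-up: without \Next the initiation sets depend only on the (constant) observation, so the set of admissible options --- and hence $\bar\mu$ itself --- is identical at every decision point; that invariance is exactly the missing channel that \Next reinstate, which is the right way to connect this theorem to Theorem~\ref{thm:fsc}.
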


	\begin{proof}
		Figure \ref{fig:fsc} shows a Finite State Controller that emits a sequence of alternating A's and B's, based on a constant uninformative observation $x_{\emptyset}$. This task requires memory because the observation does not provide any information about what was the last letter to be emitted, or which one must now be emitted. Options having memoryless policies, options executing for multiple time-steps are unable to represent the FSC exactly. A combination of options that execute for a single time-step cannot represent the FSC either, as the options framework is unable to represent memory-based policies with single-time-step options \cite{Sutton1999}.
	\end{proof}
	
	\section{Experiments}
	\label{sec:experiments}
	
	The experiments in this section illustrate how \Next allow agents to perform optimally in environments where options without \Next fail. Section \ref{sec:khepera} shows that \Next allow the agent to learn an expert-level policy for our motivating example (Section \ref{sec:motivation}). Section \ref{sec:duplicatedinput} shows that the top-level and option policies required by a repetitive task can be learned, and that learning option policies allow the agent to leverage random \Next, thereby removing the need for designing them. In Section \ref{sec:treemaze}, we progressively reduce the amount of options available to the agent, and demonstrate how \Next still allow good memory-based policies to emerge when a sub-optimal amount of options are used.
	
	All our results are averaged over 20 runs, with standard deviation represented by the light regions in the figures. The source code, raw experimental data, run scripts, and plotting scripts of our experiments, along with a detailed description of our robotic setup, are available as supplementary material. A video detailing our robotic experiment is available at \url{http://steckdenis.be/oois_demo.mp4}.
	
	\subsection{Learning Algorithm}
	\label{sec:nnet}
	
	All our agents learn their top-level and option policies (if not provided) using a single feed-forward neural network, with one hidden layer of 100 neurons, trained using Policy Gradient \cite{Sutton2000} and the Adam optimizer \cite{kingma2014adam}. Our neural network $\pi$ takes three inputs and produces one output. The inputs are problem-specific observation features $\mathbf{x}$, the one-hot encoded current option $\boldsymbol{\omega}$ ($\boldsymbol{\omega} = \mathbf{0}$ when executing the top-level policy), and a mask, $\mathbf{mask}$. The output $\mathbf{y}$ is the joint probability distribution over selecting actions or options (so that the same network can be used for the top-level and option policies), while terminating or continuing the current option:
	
	\begin{align*}
		\mathbf{h_1} &= \tanh(\mathbf{W}_1 [\mathbf{x}^T \boldsymbol{\omega}^T]^T + \mathbf{b}_1), \\
		\mathbf{\hat{y}} &= \sigma(\mathbf{W}_2 \mathbf{h_1} + \mathbf{b}_2) \circ \mathbf{mask}, \\
		\mathbf{y} &= \frac{\mathbf{\hat{y}}}{\mathbf{1}^T \mathbf{\hat{y}}},
	\end{align*}
	
	\noindent
	with $W_i$ and $b_i$ the trainable weights and biases of layer $i$, $\sigma$ the sigmoid function, and $\circ$ the element-wise product of two vectors. The fraction ensures that a valid probability distribution is produced by the network. The initiation sets of options are implemented using the $\mathbf{mask}$ input of the neural network, a vector of $2 \times (|A| + |O|)$ integers, the same dimension as the $\mathbf{y}$ output. When executing the top-level policy ($\boldsymbol{\omega} = \mathbf{0}$), the mask forces the probability of primitive actions to zero, preserves option $\omega_i$ according to $I_{\omega_i}$, and prevents the top-level policy from terminating. When executing an option policy ($\boldsymbol{\omega} \ne \mathbf{0}$), the mask only allows primitive actions to be executed. For instance, if there are two options and three actions,
	$\mathbf{mask} = \begin{smallmatrix}end \\ cont\end{smallmatrix} (
		\begin{smallmatrix}
			0 & 0 & 1 & 1 & 1 \\
			0 & 0 & 1 & 1 & 1
		\end{smallmatrix}
	)$
	when executing any of the options. When executing the top-level policy,
	$\mathbf{mask} = \begin{smallmatrix}end \\ cont\end{smallmatrix} (
		\begin{smallmatrix}
			0 & 0 & 0 & 0 & 0 \\
			a & b & 0 & 0 & 0
		\end{smallmatrix}
	)$,
	with $a = 1$ if and only if the option that has just finished is in the initiation set of the first option, and $b = 1$ according to the same rule but for the second option. The neural network $\pi$ is trained using Policy Gradient, with the following loss:
	
	\begin{align*}
		\mathcal{L}(\pi) &= -\sum\limits_{t=0}^{T} (\mathcal{R}_t - V(x_t, \omega_t)) \log (\pi(x_t, \omega_t, a_t))
	\end{align*}
	
	\noindent
	with $a_t \sim \pi(x_t, \omega_t, \cdot)$ the action executed at time $t$. The return $\mathcal{R}_t = \sum_{\tau=t}^{T} \gamma^{\tau} r_{\tau}$, with $r_{\tau} = R(s_{\tau}, a_{\tau}, s_{\tau+1})$, is a simple discounted sum of future rewards, and ignores changes of current option. This gives the agent information about the complete outcome of an action or option, by directly evaluating its flattened policy. A baseline $V(x_t, \omega_t)$ is used to reduce the variance of the $\mathcal{L}$ estimate \cite{Sutton2000}. $V(x_t, \omega_t)$ predicts the expected cumulative reward obtainable from $x_t$ in option $\omega_t$ using a separate neural network, trained on the monte-carlo return obtained from $x_t$ in $\omega_t$.
	
	\subsection{Comparison with LSTM over Options}
	\label{sec:lstm}

	In order to provide a complete evaluation of \Next, a variant of the $\pi$ and $V$ networks of Section \ref{sec:nnet}, where the hidden layer is replaced with a layer of 20 LSTM units \cite{Hochreiter1997,Sridharan2010}, is also evaluated on every task. We use 20 units as this leads to the best results in our experiments, which ensures a fair comparison of LSTM against OOIs. In all experiments, the LSTM agents are provided the same set of options as the agent with \Next. Not providing any option, or less options, leads to worse results. Options allow the LSTM network to focus on important observations, and reduces the time horizon to be considered. Shorter time horizons have been shown to be beneficial to LSTM \cite{Bakker2001}.
	
	Despite our efforts, LSTM over options only manages to learn good policies in our robotic experiment (see Section \ref{sec:khepera}), and requires more than twice the amount of episodes as \Next to do so. In our repetitive task, dozens of repetitions seem to confuse the network, that quickly diverges from any good policy it may learn (see Section \ref{sec:duplicatedinput}). On TreeMaze, a much more complex version of the T-maze task, originally used to benchmark reinforcement learning LSTM agents \cite{Bakker2001}, the LSTM agent learns the optimal policy after more than 100K episodes (not shown on the figures). These results illustrate how learning with recurrent neural networks is sometimes difficult, and how \Next allow to reliably obtain good results, with minimal engineering effort.
	
	\subsection{Object Gathering}
	\label{sec:khepera}
	
	The first experiment illustrates how \Next allow an expert-level policy to be learned for a complex robotic partially-observable repetitive task.	The experiment takes place in the environment described in Section \ref{sec:motivation}. A robot has to gather objects one by one from two terminals, green and blue, and bring them back to the root location. Because our actual robot has no effector, it navigates between the root and the terminals, but only pretends to move objects. The agent receives a reward of +2 when it reaches a full terminal, -2 when the terminal is empty. At the beginning of the episode, each terminal contains 2 to 4 objects, this amount being selected randomly for each terminal. When the agent goes to an empty terminal, the other one is re-filled with 2 to 4 objects. The episode ends after 2 or 3 emptyings (combined across both terminals). Whether a terminal is full or empty is observed by the agent only when it is at the terminal. The agent therefore has to remember information acquired at terminals in order to properly choose, at the root, to which terminal it will go.
	
	\begin{figure}
		\centering
		\includegraphics{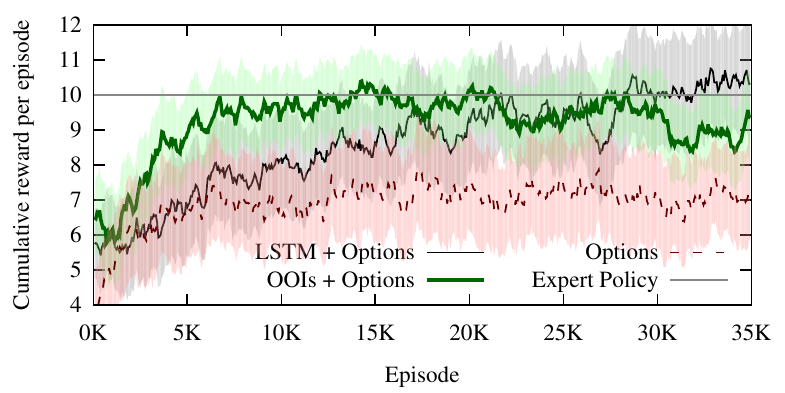}
		\caption{Cumulative reward per episode obtained on our object gathering task, with \Next, without \Next, and using an LSTM over options. \Next learns an expert-level policy much quicker than an LSTM over options. The LSTM curve flattens-out (with high variance) after about 30K episodes.}
		\label{fig:kresults}
	\end{figure}
	
	The agent has access to 12 memoryless options that go to red ($\omega_{R1..R4}$), green ($\omega_{G1..G4}$) or blue objects ($\omega_{B1..B4}$), and terminate when the agent is close enough to them to read a QR-code displayed on them. The initiation set of $\omega_{R1,R2}$ is $\omega_{G1..G4}$, of $\omega_{R3,R4}$ is $\omega_{B1..B4}$, and of $\omega_{G_i,B_i}$ is $\omega_{R_i} ~ \forall i = 1..4$. This description of the options and their \Next is purposefully uninformative, and illustrates how little information the agent has about the task. The option set used in this experiment is also richer than the simple example of Section \ref{sec:theory}, so that the solution of the problem, not going back to an empty terminal, is not encoded in \Next but must be learned by the agent.
	
	Agents with and without \Next learn top-level policies over these options. We compare them to a \emph{fixed} agent, using an expert top-level policy that interprets the options as follows: $\omega_{R1..R4}$ go to the root from a full/empty green/blue terminal (and are selected accordingly at the terminals depending on the QR-code displayed on them), while $\omega_{G1..G4,B1..B4}$ go to the green/blue terminal from the root when the previous terminal was full/empty and green/blue. At the root, \Next ensure that only one option amongst \emph{go to green after a full green}, \emph{go to green after an empty blue}, \emph{go to blue after a full blue} and \emph{go to blue after an empty green} is selected by the top-level policy: the one that corresponds to what color the last terminal was and whether it was full or empty. The agent goes to a terminal until it is empty, then switches to the other terminal, leading to an average reward of 10.\footnotemark
	
	When the top-level policy is learned, \Next allow the task to be solved, as shown in Figure \ref{fig:kresults}, while standard initiation sets do not allow the task to be learned. Because experiments on a robot are slow, we developed a small simulator for this task, and used it to produce Figure \ref{fig:kresults} after having successfully asserted its accuracy using two 1000-episodes runs on the actual robot. The agent learns to properly select options at the terminals, depending on the QR-code, and to output a proper distribution over options at the root, thereby matching our expert policy. The LSTM agent learns the policy too, but requires more than twice the amount of episodes to do so. The high variance displayed in Figure \ref{fig:kresults} comes from the varying amounts of objects in the terminals, and the random selection of how many times they have to be emptied.
	
	Because fixed option policies are not always available, we now show that \Next allow them to be learned at the same time as the top-level policy.
	
	\footnotetext{$\frac{2+3}{2} \times (-2 + \frac{2+4}{2} \times 2)$, 2 or 3 emptyings of terminals that contain 2 to 4 objects. Average confirmed experimentally from 1000 episodes using the policy, $p > 0.30$.}
	
	\subsection{Modified DuplicatedInput}
	\label{sec:duplicatedinput}
	
	In some cases, a hierarchical reinforcement learning agent may not have been provided policies for several or any of its options. In this case, \Next allow the agent to learn its top-level policy, the option policies and their termination functions. In this experiment, the agent has to learn its top-level and option policies to copy characters from an input tape to an output tape, removing duplicate B's and D's (mapping ABBCCEDD to ABCCED for instance; B's and D's always appear in pairs). The agent only observes a single input character at a time, and can write at most one character to the output tape per time-step.
	
	The input tape is a sequence of $N$ symbols $x \in \Omega$, with $\Omega = \{A, B, C, D, E\}$ and $N$ a random number between 20 and 30. The agent observes a single symbol $x_t \in \Omega$, read from the $i$-th position in the input sequence, and does not observe $i$. When $t = 1$, $i = 0$.  There are 20 actions ($5 \times 2 \times 2$), each of them representing a symbol (5), whether it must be pushed onto the output tape (2), and whether $i$ should be incremented or decremented (2). A reward of 1 is given for each correct symbol written to the output tape. The episode finishes with a reward of -0.5 when an incorrect symbol is written.
	
	\begin{figure}
		\centering
		\includegraphics{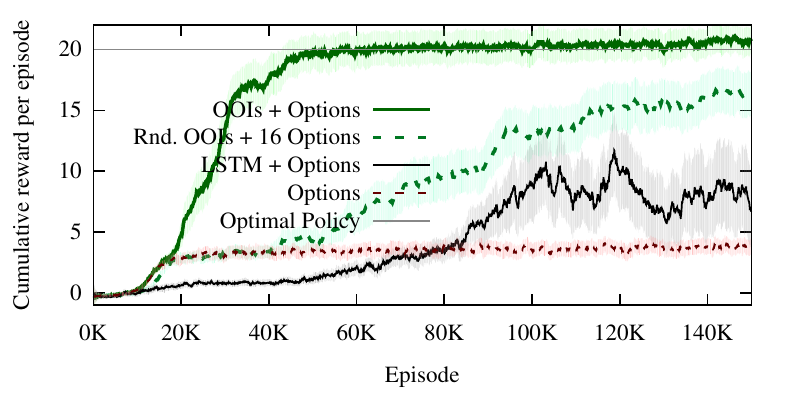}
		\caption{Cumulative reward per episode obtained on modified DuplicatedInput, with random or designed \Next, without \Next and using an LSTM over options. Despite our efforts, an LSTM over options repeatedly learns then forgets optimal policies, as shown by the high variance of its line.}
		\label{fig:diresults}
	\end{figure}
	
	The agent has access to two options, $\omega_1$ and $\omega_2$. \Next are designed so that $\omega_2$ cannot follow itself, with no such restriction on $\omega_1$. No reward shaping or hint about what each option should do is provided. The agent automatically discovers that $\omega_1$ must copy the current character to the output, and that $\omega_2$ must skip the character without copying it. It also learns the top-level policy, that selects $\omega_2$ (skip) when observing B or D and $\omega_2$ is allowed, $\omega_1$ otherwise (copy).
	
	Figure \ref{fig:diresults} shows that an agent with two options and \Next learns the optimal policy for this task, while an agent with two options and only standard initiation sets ($I_\omega = \Omega ~ \forall \omega$) fails to do so. The agent without \Next only learns to copy characters and never skips any (having two options does not help it). This shows that \Next are necessary for learning this task, and allow to learn top-level and option policies suited to our repetitive partially observable task.
	
	When the option policies are learned, the agent becomes able to adapt itself to random \Next, thereby removing the need for designing \Next. For an agent with $N$ options, each option has $\frac{N}{2}$ randomly-selected options in its initiation set, with the initiation sets re-sampled for each run. The agents learn how to leverage their option set, and achieve good results on average (16 options used in Figure \ref{fig:diresults}, more options lead to better results). When looking at individual runs, random \Next allow optimal policies to be learned, but several runs require more time than others to do so. This explains the high variance and noticeable steps shown in Figure \ref{fig:diresults}.
	
	The next section shows that an improperly-defined set of human-provided options, as may happen in design phase, still allows the agent to perform reasonably well. Combined with our results with random \Next, this shows that \Next can be tailored to the exact amount of domain knowledge available for a particular task.
	
	\subsection{TreeMaze}
	\label{sec:treemaze}
	
	The optimal set of options and \Next may be difficult to design. When the agent learns the option policies, the previous section demonstrates that random \Next suffice. This experiment focuses on human-provided option policies, and shows that a sub-optimal set of options, arising from a mis-specification of the environment or normal trial-and-error in design phase, does not prevent agents with \Next from learning reasonably good policies.

	TreeMaze is our generalization of the T-maze environment \cite{Bakker2001} to arbitrary heights. The agent starts at the root of the tree-like maze depicted in Figure \ref{fig:treemaze}, and has to reach the extremity of one of the 8 leaves. The leaf to be reached (the goal) is chosen uniformly randomly before each episode, and is indicated to the agent using 3 bits, observed one at a time during the first 3 time-steps. The agent receives no bit afterwards, and has to remember them in order to navigate to the goal. The agent observes its position in the current corridor (0 to 4) and the number of T junctions it has already crossed (0 to 3). A reward of -0.1 is given each time-step, +10 when reaching the goal. The episode finishes when the agent reaches any of the leaves. The optimal reward is 8.2.
	
	\begin{figure}
		\centering
		\includegraphics{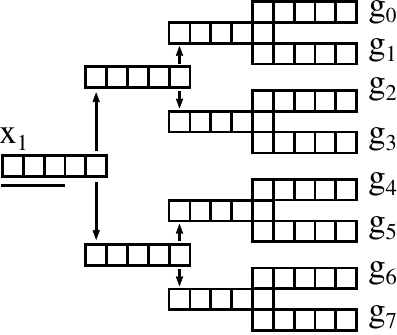}
		\caption{TreeMaze environment. The agent starts at $x_1$ and must go to one of the leaves. The leaf to be reached is indicated by 3 bits observed at time-steps 1, 2 and 3.}
		\label{fig:treemaze}
	\end{figure}
	
	We consider 14 options with predefined memoryless policies, several of them sharing the same policy, but encoding distinct states (among 14) of a 3-bit memory where some bits may be unknown. 6 partial-knowledge options $\omega_{0\md\md}$, $\omega_{1\md\md}$, $\omega_{00\md}$, ..., $\omega_{11\md}$ go right then terminate. 8 full-knowledge options $\omega_{000}$, $\omega_{001}$, ..., $\omega_{111}$ go to their corresponding leaf. \Next are defined so that any option may only be followed by itself, or one that represents a memory state where a single 0 or - has been flipped to 1. Five agents have to learn their top-level policy, which requires them to learn how to use the available options to remember to which leaf to go. The agents do not know the name or meaning of the options. Three agents have access to all 14 options (with, without \Next, and LSTM). The agent with \Next (8) only has access to full-knowledge options, and therefore cannot disambiguate unknown and 0 bits. The agent with \Next (4) is restricted to options $\omega_{000}$, $\omega_{010}$, $\omega_{100}$ and $\omega_{110}$ and therefore cannot reach odd-numbered goals. The options of the (8) and (4) agents terminate in the first two cells of the first corridor, to allow the top-level policy to observe the second and third bits.
	
	Figure \ref{fig:treemazeresults} shows that the agent with \Next (14) consistently learns the optimal policy for this task. When the number of options is reduced, the quality of the resulting policies decreases, while still remaining above the agent without \Next. Even the agent with 4 options, that cannot reach half the goals, performs better than the agent without \Next but 14 options. This experiment demonstrates that \Next provide measurable benefits over standard initiation sets, even if the option set is largely reduced. 
	
	Combined, our three experiments demonstrate that \Next lead to optimal policies in challenging POMDPs, consistently outperform LSTM over options, allow the option policies to be learned, and can still be used when reduced or no domain knowledge is available.
	
	\begin{figure}
		\centering
		\includegraphics{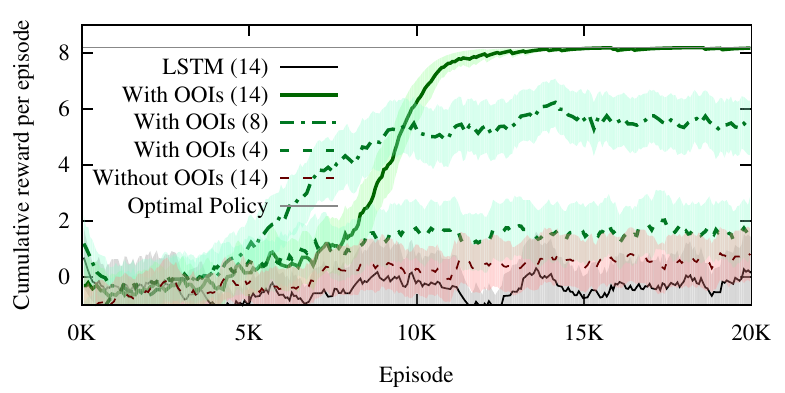}
		\caption{Cumulative reward per episode obtained on TreeMaze, using 14, 8 or 4 options. Even with an insufficient amount of options (8 or 4), \Next lead to better performance than no \Next but 14 options. LSTM over options learns the task after more than 100K episodes.}
		\label{fig:treemazeresults}
	\end{figure}
	
	\section{Conclusion and Future Work}
	\label{sec:conclusion}
	
	This paper proposes \Next, an extension of the initiation sets of options so that they restrict which options are allowed to be executed after one terminates. This makes options as expressive as Finite State Controllers. Experimental results confirm that challenging partially observable tasks, simulated or on physical robots, one of them requiring exact information storage for hundreds of time-steps, can now be solved using options. Our experiments also illustrate how \Next lead to reasonably good policies when the option set is improperly defined, and that learning the option policies allow random \Next to be used, thereby providing a turnkey solution to partial observability.
	
	Options with \Next also perform surprisingly well compared to an LSTM network over options. While LSTM over options does not require the design of \Next, their ability to learn without any a-priori knowledge comes at the cost of sample efficiency and explainability. Furthermore, random \Next are as easy to use as an LSTM and lead to superior results (see Section \ref{sec:duplicatedinput}). \Next therefore provide a compelling alternative to recurrent neural networks over options, applicable to a wide range of problems.
	
	Finally, the compatibility between \Next and a large variety of reinforcement learning algorithms leads to many future research opportunities. For instance, we have obtained very encouraging results in continuous action spaces, using CACLA \cite{VanHasselt2007} to implement parametric options, that take continuous arguments when executed, in continuous-action hierarchical POMDPs.
	
	\section*{Acknowledgments}
	
	The first author is ``Aspirant'' with the Science Foundation of Flanders (FWO, Belgium), grant number 1129317N. The second author is ``Postdoctoral Fellow'' with the FWO, grant number 12J0617N.
	
	Thanks to Finn Lattimore, who \emph{gave} a computer to the first author, so that he could finish this paper while attending the UAI 2017 conference in Sydney, after his own computer unexpectedly fried. Thanks to Joris Scharpff for his very helpful input on this paper.

	\vfill
	\pagebreak
	
	\small
	\bibliographystyle{named}
	\bibliography{biblio}

\end{document}